\newtheorem{theorem}{Theorem}
\DeclareMathOperator{\sign}{sign}
\title{BOSS: Bidirectional One-Shot Synthesis of Adversarial Examples}
\name{Anonymous}
\address{Anonymous}
\name{%
    Ismail R. Alkhouri$^{1}$%
    \qquad Alvaro Velasquez$^{2}$%
    \qquad George K. Atia$^{1,3}$\thanks{This work was supported in part by NSF CAREER Award CCF-1552497 and NSF Award CCF-2106339, DOE Award DE-EE0009152, AFRL Contract Number FA8750-20-3-1004, and AFOSR Award 20RICOR012.}%
}
\address{%
    $^{1}$ \small{Department of Electrical and Computer Engineering,
University of Central Florida, Orlando FL, USA 
}  
\\%
    $^{2}$ \small{Information Directorate, Air Force Research Laboratory, Rome NY, USA 
    } 
    \\
    $^{3}$ \small{Department of Computer Science, University of Central Florida, Orlando FL, USA}%
}
\begin{document}

\maketitle

\begin{abstract}

The design of additive perturbations to the inputs of classifiers has become a central focus of adversarial machine learning. An alternative approach is to synthesize adversarial examples using structures akin to generative adversarial networks, albeit with the use of large amounts of training data. By contrast, this paper considers the one-shot synthesis of adversarial examples that requires only a single reference datum. In particular, we explore solutions where the generated data must simultaneously satisfy user-defined constraints on its structural similarity to the reference input datum and the output of the classifier of interest it induces. This gives rise to what we call the Bidirectional One-Shot Synthesis (BOSS) problem. We prove that the BOSS problem is \textbf{NP-complete}. The experimental results verify that the targeted and confidence reduction attack methods developed either outperform or on par with state-of-the-art methods.

\end{abstract}
\begin{keywords}
One-Shot Synthesis, Adversarial Attacks, Trained Classifiers, Generative Models, Targeted and Confidence Reduction Attacks, Decision Boundary Examples
\end{keywords}
%
\section{Introduction}
\label{sec:intro}

The problem of robustness is being assessed in adversarial machine learning via additive perturbations to data and the synthesis of adversarial examples, which are often used to test the robustness of a given model. In this paper, we reconcile the notion of one-shot learning \cite{wang2020generalizing} and the synthesis of adversarial examples for the first time in what we call one-shot synthesis. In particular, given a datum $\mathbf{x}_{\textrm{d}}$ and a pre-trained model $p(.\:;\theta)$ parameterized by $\theta$, we propose a synthesis procedure that generates a new datum $\mathbf{x}$ to be used as an input to $p(.\:;\theta)$ such that constraints are satisfied on both the input structure and the output inference. In terms of the input, we ensure that $\mathbf{x}$ is similar to the given reference datum $\mathbf{x}_{\textrm{d}}$ by enforcing a small distance $d(\mathbf{x}, \mathbf{x}_{\textrm{d}}) \leq \delta_s$. In terms of the output, we generate $\mathbf{x}$ such that it approximately induces a user-defined output distribution $p_{\textrm{d}}$ as the inference result $p(.\:;\theta)$ of the pre-trained model by enforcing a small distance $D(p(\mathbf{x}; \theta), p_{\textrm{d}}) \leq \delta_c$. In this sense, the underlying Bidirectional One-Shot Synthesis (BOSS) problem is concerned with generating data satisfying constraints on both the input and output directions of the given classifier $p(.\:; \theta)$. \textcolor{black}{By controlling the induced output distribution, our approach generalizes traditional notions of targeted and non-targeted attacks \cite{machado2020adversarial}.} Confidence reduction attacks can also be implemented in our approach, where the goal is to lower the confidence level of the true label to cause ambiguity \cite{papernot2016limitations}, specifically against systems for which a confidence threshold is introduced and the classification is only regarded if the prediction confidence score is above that threshold \cite{pmlr-v119-stutz20a}.

We propose a solution to the BOSS problem by leveraging generative models whose parameters are updated based on the distance between the given datum $\mathbf{x}_{\textrm{d}}$ (distribution $p_{\textrm{d}}$) and the synthesized input datum $\mathbf{x}$ (output inference $p(\mathbf{x}\:;\theta)$). See Figure \ref{fig:motiv} for the problem description and BOSS samples. 

\begin{figure*}[t]
    \centering
    \includegraphics[width=18cm]{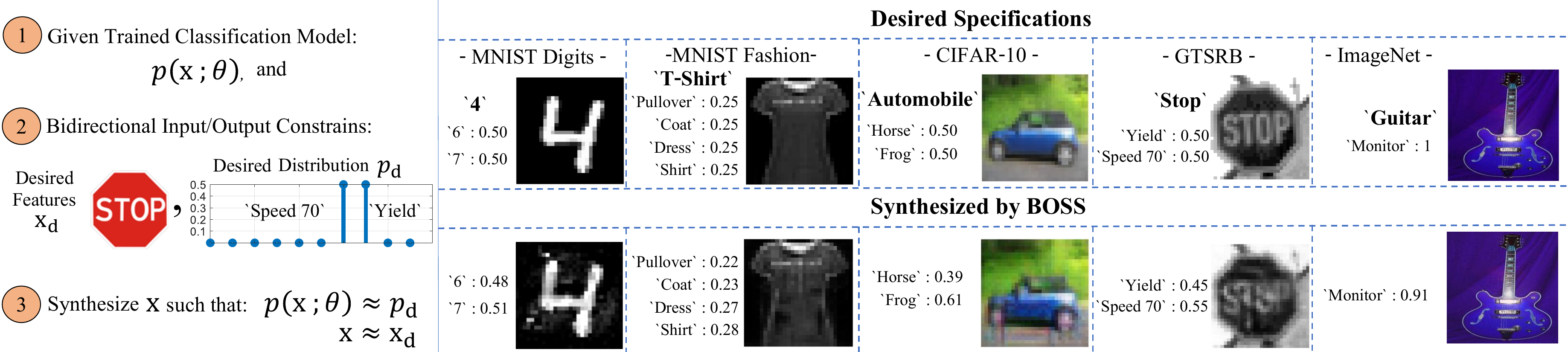}
    \caption{\small{Problem demonstration (\textit{left}) and examples (\textit{right}). The true labels are placed on the left of each desired features (image) with bold font. First row of images represent the desired features where the desired PMFs are placed on the left of each sample. The second row presents the synthesized examples, by BOSS, with their corresponding predictions w.r.t their trained classifiers. From left to right, samples are picked from the MNIST digits \cite{lecun2010mnist}, MNIST fashion \cite{DBLP:journals/corr/abs-1708-07747}, CIFAR-10 \cite{krizhevsky2009learning}, GTSRB \cite{Stallkamp2012}, and ImageNet \cite{5206848}, respectively.  
    \vspace{-0.5cm}
    }}
    \label{fig:motiv}
\end{figure*}

It is worth noting that our generative approach is a one-shot synthesis solution in that it only requires a single datum $\mathbf{x}_{\textrm{d}}$, which mitigates the excessive data requirements of popular methods based on Generative Adversarial Networks (GANs) \cite{goodfellow2014generative}. In fact, our proposed framework is more similar to the additive attack methods where a large body of works are presented such as the CW attack \cite{carlini2017towards}, the 
L-BFGS attack \cite{szegedy2013intriguing}, Deepfool \cite{moosavi2016deepfool}, Fast Adaptive Boundary (FAB) attack \cite{croce2020minimally}, saliency map attack \cite{papernot2016limitations}, and NewtonFool \cite{jang2017objective}. 

The contributions of the paper are the following. First, we present the BOSS problem to synthesize feature vectors that follow some desired input and output specifications. Second, we prove that BOSS is \textbf{NP-complete}. Our third contribution is the proposed algorithmic procedure that is based on generative networks and the back-propagation algorithm \cite{riedmiller1993direct} to produce (from scratch) these examples in a white-box settings. Fourth, we present methods to select the input/output specifications to generate targeted adversarial attacks, confidence reduction attacks, and decision boundary samples. On different attack evaluation metrics, we show that BOSS either on par or outperform state-of-the-art methods. Further, we show samples from small-scale and large-scale datasets on famous state-of-the-art classification architectures.


\section{Problem Formulation \& Characterization}
\label{Problem Formulation}

Suppose we have some trained model $p$ with parameters $\theta$ (e.g., a trained Neural Network) and a probability distribution $p(.\:; \theta) : \mathbb{R}^N \rightarrow \Delta^M$ over the output of the model with entries $p_{m}(\mathbf{x}\:;\theta)$ for $m\in[M]:=\{1,2,\dots,M\}$, where $M$ is the total number of outputs, and $\Delta^M$ is the probability simplex over $M$ dimensions.



Given a clean example (desired input features) $\mathbf{x}_{\textrm{d}}$, the well-known formulation of the basic iterative extension of the Fast Gradient Sign Method (FGSM) method \cite{kurakin2016adversarial} generates an adversarial example $\mathbf{x}$ by minimizing some differentialable loss function between $p(\mathbf{x};\theta)$ and $p_{\textrm{d}}$. The distance between 
$\mathbf{x}$ and $\mathbf{x}_{\textrm{d}}$, however, is restricted to the $l_p$ norm. A more general formulation is used in \cite{carlini2017towards} where the loss functions on the input and output of the classifier of interest can be chosen more flexibly. Therefore, we will compare our approach to the attacks in \cite{carlini2017towards} and an advanced version in \cite{chen2018ead}.

Let $d : \mathbb{R}^N \times \mathbb{R}^N \rightarrow [0,1]$ and $D:\Delta^M \times \Delta^M \rightarrow [0,1]$ denote distance functions between two feature vectors and distributions, respectively, where a value $0$ indicates identical arguments. 



\newtheorem{definition}{Definition}
\begin{definition}[BOSS Problem]
Given a learning model $p(.\:; \theta): \mathbb{R}^N \rightarrow \Delta^M$ parameterized by $\theta$, a tensor $\mathbf{x}_\textrm{d} \in \mathbb{R}^N$, and a probability distribution $p_\textrm{d} \in \Delta^M$, find an input tensor $\mathbf{x} \in \mathbb{R}^N$ such that $d(\mathbf{x}, \mathbf{x}_\textrm{d}) \leq \delta_\textrm{s}$ and $D(p(.\:; \theta), p_\textrm{d}) \leq \delta_\textrm{c}$, where upper bounds $\delta_\textrm{s}$ and $\delta_\textrm{c}$ and loss functions $d$ and $D$ are given.
\label{def:BOSS}
\end{definition}

First, we prove that the BOSS problem is \textbf{NP-complete}. This establishes that, in general, there is no polynomial-time solution to the BOSS problem unless \textbf{P} = \textbf{NP}. In Section \ref{sec: BOSS}, we develop a generative approach to obtain an approximate solution to the BOSS problem. 
\begin{definition}[CLIQUE Problem]
Given an undirected graph $G = (U, E)$ and an integer $k$, find a fully connected sub-graph induced by $U' \subseteq U$ such that $|U'| = k$.
\label{def:clique}
\end{definition}
\begin{theorem}
The Bidirectional One-Shot Synthesis (BOSS) problem in Definition \ref{def:BOSS} is \textbf{NP-complete}.
\label{theorem:complexity}
\end{theorem}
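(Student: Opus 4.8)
The plan is to establish NP-completeness in two stages: membership in \textbf{NP}, and \textbf{NP}-hardness via a polynomial-time reduction from CLIQUE (Definition~\ref{def:clique}). Membership should be the easy direction: given a candidate input $\mathbf{x}$, one evaluates the pre-trained model $p(\mathbf{x};\theta)$, computes $d(\mathbf{x},\mathbf{x}_\textrm{d})$ and $D(p(\mathbf{x};\theta),p_\textrm{d})$, and checks the two inequalities against $\delta_\textrm{s}$ and $\delta_\textrm{c}$ — all doable in polynomial time provided the model, the distance functions, and the numeric precision are part of the input in the usual encoded sense. I would state this carefully so that the forward pass through $p(\cdot;\theta)$ is genuinely polynomial in the input size (fixed architecture, rational weights).

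For hardness, I would reduce from CLIQUE. Given $(G=(U,E),k)$ with $|U|=n$, I would construct in polynomial time a specific instance $(p(\cdot;\theta),\mathbf{x}_\textrm{d},p_\textrm{d},\delta_\textrm{s},\delta_\textrm{c},d,D)$ of BOSS whose feasibility is equivalent to the existence of a $k$-clique in $G$. The natural encoding is to let $\mathbf{x}\in\mathbb{R}^N$ with $N=n$ represent an indicator-like vector over the vertices, and to design a small neural network $p(\cdot;\theta)$ whose output coordinates check the two defining properties of a $k$-clique: (i) exactly $k$ vertices are ``selected,'' and (ii) every pair of selected vertices is joined by an edge, equivalently, no non-edge has both endpoints selected. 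Property (i) can be read off from a linear unit computing $\sum_i x_i$; property (ii) can be enforced by units that, for each non-edge $\{i,j\}\notin E$, compute something like $\mathrm{ReLU}(x_i+x_j-1)$ and sum these, so the aggregate is $0$ iff the selection is a clique-candidate. I would wire these aggregates through a softmax (or a fixed output-normalization map into $\Delta^M$) so that the target PMF $p_\textrm{d}$ is matched within $\delta_\textrm{c}$ exactly when both counts hit their intended values. The role of the input-side constraint $d(\mathbf{x},\mathbf{x}_\textrm{d})\le\delta_\textrm{s}$ is to pin $\mathbf{x}$ close enough to a $\{0,1\}^n$-valued anchor $\mathbf{x}_\textrm{d}$ (e.g.\ the all-ones or all-zeros vector, or a generic point) that the continuous relaxation cannot cheat — i.e.\ so that any feasible $\mathbf{x}$ must essentially be an integral selection. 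One must choose $d$, $\delta_\textrm{s}$, $\delta_\textrm{c}$, and the network weights so that the continuous feasible region of the BOSS instance contains an integral point iff $G$ has a $k$-clique, and conversely a $k$-clique yields a feasible (integral) $\mathbf{x}$.

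The main obstacle I anticipate is controlling the continuous relaxation: BOSS ranges over $\mathbf{x}\in\mathbb{R}^N$, not over $\{0,1\}^N$, so I must rule out fractional ``solutions'' that satisfy the relaxed constraints without corresponding to any clique. This requires a careful gadget argument — e.g.\ combining the ReLU non-edge penalties with an additional set of output units that penalize deviation from $\{0,1\}$ (such as units computing $x_i(1-x_i)$ or $\min(x_i,1-x_i)$ via ReLUs), together with a tight choice of $\delta_\textrm{c}=0$ (or a small slack) forcing all these penalties to vanish simultaneously, which is possible only at an integral clique indicator. A second, more routine obstacle is packaging everything so the map into the simplex $\Delta^M$ is legitimate and the distance $D$ (e.g.\ total variation or $\ell_1$ between PMFs) cleanly translates ``all gadget outputs are zero'' into ``$D(p(\mathbf{x};\theta),p_\textrm{d})\le\delta_\textrm{c}$.'' Once the gadget correctness is nailed down, verifying that the reduction is polynomial-time (the network has $O(n+|E|)$ units and $O(n^2)$ weights) and that the equivalence holds in both directions is straightforward, completing the proof that BOSS is \textbf{NP}-complete.
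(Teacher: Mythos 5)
Your proposal follows the same overall strategy as the paper: membership in \textbf{NP} by direct verification of the two constraints, and \textbf{NP}-hardness by a polynomial-time reduction from CLIQUE that encodes the graph into a one-hidden-layer ReLU network, uses the output-distribution constraint to certify the clique property, and uses the input-distance constraint to prevent fractional solutions from cheating. The concrete gadgets differ in two respects worth noting. First, you test cliqueness negatively, via non-edge penalty units $\mathrm{ReLU}(x_i+x_j-1)$ whose sum must vanish, whereas the paper tests it positively: it places one ReLU per \emph{edge} with bias $\epsilon-2$, so that a $k$-clique feeds exactly $\epsilon k(k-1)/2$ into the second softmax output, and any non-clique choice of $k$ vertices feeds strictly less; both variants are sound. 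Second, and more importantly, the step you flag as your ``main obstacle'' --- forcing any feasible $\mathbf{x}$ to be (effectively) an integral $k$-subset indicator --- is exactly where the paper's construction does its real work, and it resolves it without any extra integrality-penalty units: it sets $\mathbf{x}_\textrm{d}=\mathbf{0}$, $d=$ MSE, $\delta_\textrm{s}=k/n$, gives each vertex unit bias $\epsilon-1$ with $\epsilon = 1-\sqrt{1-1/(k+1)}$, and observes that (i) reaching input $\epsilon k$ at the first softmax requires at least $k$ coordinates exceeding $1-\epsilon$, while (ii) $k+1$ such coordinates already force $d(\mathbf{x},\mathbf{x}_\textrm{d}) \ge (k+1)(1-\epsilon)^2/n > k/n$, so exactly $k$ coordinates are active and each must equal $1$. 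Your sketch correctly identifies that some such quantitative argument is needed and proposes workable alternatives (e.g., ReLU-computable deviation penalties with $\delta_\textrm{c}=0$), but leaves the parameter choices unspecified; the paper's version is arguably cleaner since the MSE budget alone does the job. Neither approach has a substantive advantage in generality --- both yield the same hardness conclusion for the same restricted class of instances (ReLU network, softmax output, MSE input distance, $\delta_\textrm{c}=0$).
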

\begin{proof}
It is easy to verify that the BOSS problem is in \textbf{NP} since, given a tensor $\mathbf{x}$, one can check whether the input and output constraints $d(\mathbf{x}, \mathbf{x}_\textrm{d}) \leq \delta_\textrm{s}$ and $D(p(\mathbf{x}\:;\theta), p_\textrm{d}) \leq \delta_\textrm{c}$ are satisfied in polynomial time. It remains to be shown whether the BOSS problem is \textbf{NP-hard}. We will establish this result via a reduction from the CLIQUE problem in Definition \ref{def:clique}. Given a CLIQUE instance $\left< G = (U, E), k \right>$ with $|U| = n$ and $|E| = m$, we construct its corresponding BOSS instance $\left< p(.\:; \theta), \mathbf{x}_\textrm{d}, p_\textrm{d}, \delta_\textrm{s}, \delta_\textrm{c} \right>$ as follows. Let $\mathbf{x}_\textrm{d} = \mathbf{0}$ denote the all-zeroes vector and let $p_\textrm{d}$ be defined as the desired output distribution below
\begin{equation}
    p_\textrm{d} = \left( \frac{e^{\epsilon k}}{e^{\epsilon k (k - 1) / 2} + e^{\epsilon k}}\:,\: \frac{e^{\epsilon k (k - 1) / 2}}{e^{\epsilon k (k - 1) / 2} + e^{\epsilon k}} \right)^T,
\end{equation}
where $\epsilon \leq 1 - \sqrt{1 - (1 / (k + 1))}$. Finally, let $\delta_\textrm{s} = k / n$ and $\delta_\textrm{c} = 0$. We choose the mean square error loss (MSE) function to compute $d(\mathbf{x}, \mathbf{x}_\textrm{d}) \leq \delta_\textrm{s}$. The choice of loss function for computing $D(p(\mathbf{x}; \theta), p_\textrm{d}) \leq \delta_\textrm{c}$ is superfluous since we have chosen $\delta_\textrm{c} = 0$. For the given trained model $p(.\:; \theta)$, we define its connectivity and parameters $\theta$ as follows. The input layer consists of $n$ entries given by the solution $\mathbf{x} \in [0, 1]^n$. There is one hidden layer consisting of $n + m$ ReLU functions $\sigma_1, \dots, \sigma_{n + m}$ such that the first $n$ ReLU functions have a bias term of $\epsilon - 1$ and the next $m$ ReLU functions have a bias term of $\epsilon - 2$. Finally, there is an output layer with two softmax output activation functions $p_1(\mathbf{x}\:; \theta)$ and $p_2(\mathbf{x}\:;\theta)$. Let $\theta^h_{ij}$ denote the weight of the connection between the $i^\text{th}$ input $x_i$ and the $j^\text{th}$ ReLU activation $\sigma_j$ in the hidden layer. For each $u_i \in U$ in the given CLIQUE instance, we have $\theta^h_{ii} = 1$. The outputs of these $n$ ReLU activation functions are fully connected to the softmax output activation function $p_1(\mathbf{x}\:; \theta)$, each with a corresponding weight of $1$. For each edge $e_k = (u_i, u_j) \in E$, we have $\theta^h_{i, n + k} = \theta^h_{j, n + k} = 1$. This defines the input connectivity of ReLU functions $\sigma_{n + 1}$ to $\sigma_{n + m}$. The outputs of these are then fully connected to the second softmax function $p_2(\mathbf{x}\:;\theta)$, each with weight $1$. See Figure \ref{fig:exampleReduction} for an example. We now prove that there is a clique of size $k$ in $G$ if and only if there is a feasible solution $\mathbf{x}$ to the reduced BOSS instance. \par

($\implies$) Assume there is a clique of size $k$ in $G$. We can derive a feasible solution $\mathbf{x}$ to the reduced BOSS instance as follows. For every vertex $u_i \in U$ in the clique, let $x_i = 1$ and let all other values of $\mathbf{x}$ be $0$. The corresponding MSE loss is $d(\mathbf{x}, \mathbf{x}_\textrm{d}) = k / n$, thereby satisfying the input constraint defined by $\delta_\textrm{s}$. The solution $\mathbf{x}$ induces an output of $\sigma_i(x_i + (\epsilon - 1)) = \epsilon$ for each entry of $\mathbf{x}$ corresponding to a vertex $u_i$ in the clique and an output of $0$ for all other entries. Thus, we have $k$ inputs of value $\epsilon$ into the first softmax output function. Now, let us consider the edges induced by this clique. For each edge $e_k = (u_i, u_j) \in E$ in the clique, we have $\sigma_{n + k} (x_i + x_j + (\epsilon - 2)) = \epsilon$ and an output of $0$ for all other edges. Since there are $k (k - 1) / 2$ edges in a clique of size $k$, this yields $k (k - 1) / 2$ inputs of value $\epsilon$ into the second softmax output function. Thus, we have $p(\mathbf{x}\:;\theta) = p_\textrm{d}$ and the constraint $D(p(\mathbf{x}; \theta), p_\textrm{d}) \leq 0 = \delta_\textrm{c}$ is satisfied.  As a caveat, it is worth noting that, under this construction, the all-zeroes vector $\mathbf{0}$ yields equal outputs $p_1(\mathbf{0}; \theta) = p_2(\mathbf{0}; \theta) = 1/2$. Per the preceding arguments, it is also the case that a feasible solution $\mathbf{x}$ derived for a clique of size $k = 3$ will output $p_1(\mathbf{x}; \theta) = p_2(\mathbf{x}; \theta) = 1/2$. This is because $k = k (k - 1) / 2$ for $k = 3$. Thus, for the remainder of the proof, we assume that cliques of interest are of size $k > 3$.

\begin{figure}[t]
\vspace{-0.2cm}
    \centering
    \includegraphics[width=9.0cm]{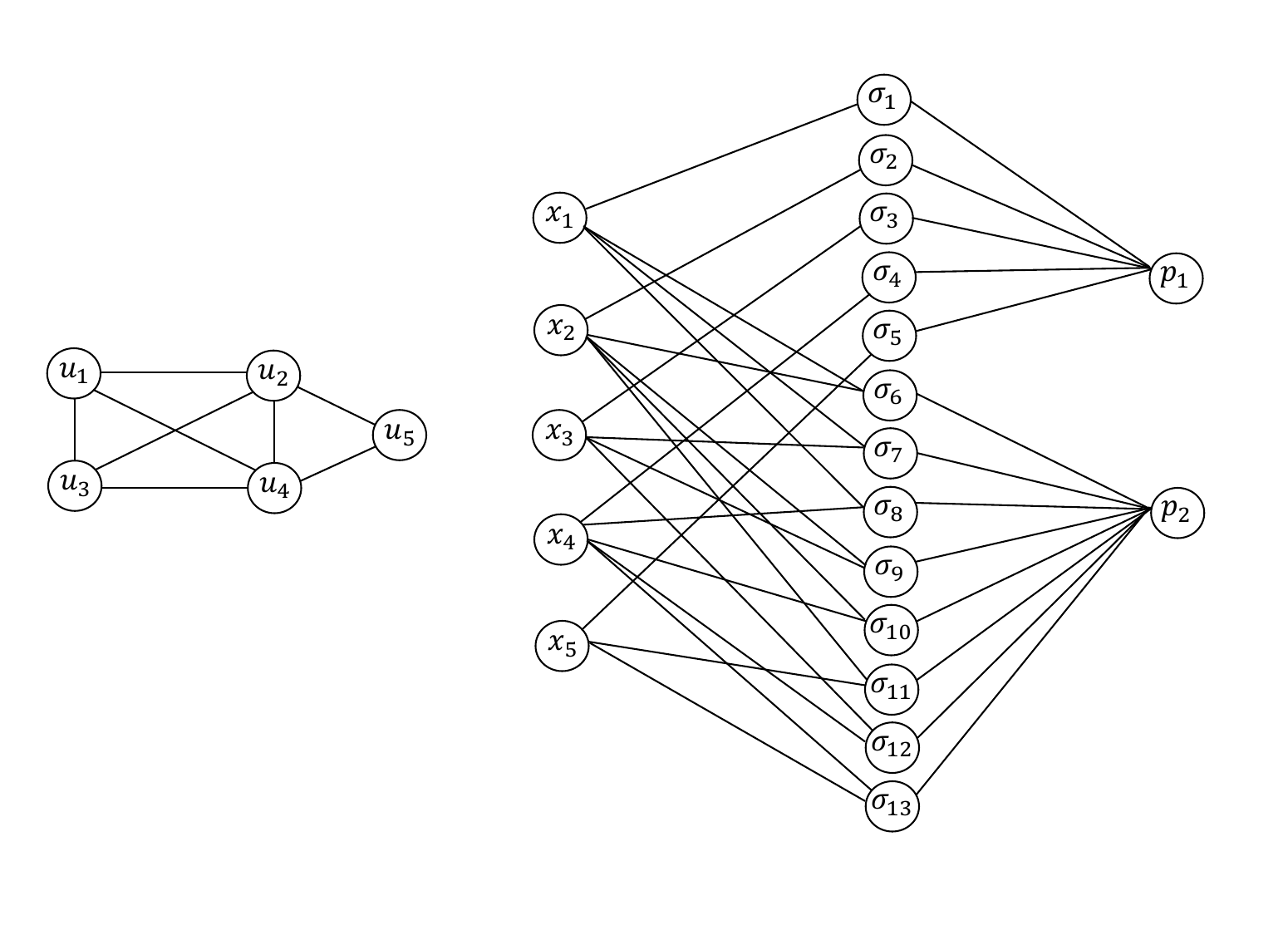}
    \vspace{-1cm}
    \caption{\small{Example reduction from a graph $G$ (\textit{left}) to a classifier $p(.\:; \theta)$ (\textit{right}).}}
    \vspace{-0.5cm}
    \label{fig:exampleReduction}
\end{figure}

($\impliedby$) We prove the contrapositive. That is, if there is no clique of size $k$ in $G$, then the reduced BOSS instance is infeasible. We proceed by showing that there must be exactly $k$ non-zero entries in $\mathbf{x}$ in order to satisfy constraints $d(\mathbf{x}, \mathbf{x}_\textrm{d}) \leq k / n$ and $D(p(\mathbf{x}\:;\theta), p_\textrm{d}) \leq 0$ and that, if there is no clique of size $k$, then there is no choice of $k$ non-zero entries in $\mathbf{x}$ that will satisfy $D(p(\mathbf{x}\:;\theta), p_\textrm{d}) \leq 0$. Note that there must be at least $k$ entries in $\mathbf{x}$ with value strictly greater than $(1 - \epsilon)$ in order to yield an input of $\epsilon k$ into the first softmax output function and satisfy the first entry in $p_\textrm{d}$. Let us consider the minimum MSE loss for a solution $\mathbf{x}$ with more than $k$ non-zero entries. For $k + 1$ entries of value strictly greater than $(1 - \epsilon)$, we have $d(\mathbf{x}, \mathbf{x}_\textrm{d}) = (k + 1)(1 - \epsilon)^2 / n$. With some algebraic manipulation, we have that, for any value of $\epsilon \leq 1 - \sqrt{1 - (1 / (k + 1))}$, $(k + 1)(1 - \epsilon)^2 / n > k / n$, thereby violating the constraint $d(\mathbf{x}, \mathbf{x}_\textrm{d}) \leq \delta_\textrm{s}$. Thus, there must be exactly $k$ non-zero entries in $\mathbf{x}$. Now, let us consider the second softmax output function, which requires an input of $\epsilon k (k - 1) / 2$. Since there is no clique of size $k$ in $G$, any choice of $k$ vertices in $G$ will induce a set of edges whose cardinality is strictly less than $k (k - 1) / 2$. Therefore, the output of the second softmax function will be strictly less than the second entry in $p_\textrm{d}$. This violates the constraint $D(p(\mathbf{x}\:; \theta), p_\textrm{d}) \leq 0$.
\end{proof}

Note that, for a given CLIQUE instance in the proof of Theorem \ref{theorem:complexity}, the corresponding reduced BOSS instance is such that, if there exists a polynomial-time solution to the BOSS problem, then we could use this solution to solve the CLIQUE problem in polynomial time. This would imply that \textbf{P} = \textbf{NP}. We therefore conjecture that a polynomial-time solution to the BOSS problem is not likely to exist.


\section{Generative Approach}
\label{sec: BOSS}

To obtain a solution to the BOSS problem in Definition \ref{def:BOSS}, we take a generative approach in which $\mathbf{x}$ is obtained as the output of a generative network, $g(.\:;\phi):\mathbb{R}^Q \rightarrow \mathbb{R}^N$, with parameters $\phi$, i.e., $g(\mathbf{z}\:;\phi) = \mathbf{x}$,
\textcolor{black}{where $\mathbf{z} \in\mathbb{R}^Q$ is a random input to the generative network}. We utilize the adjustable parameters of network $g$ for the 
objectives of BOSS. Therefore, we define the combined network $h(.\:;\psi): \mathbb{R}^Q \rightarrow [M]$, whose layers are the concatenation of the layers of $g$ and $p$, where $\psi=\{\phi,\theta\}$. In other words, $h(\mathbf{z}\:;\psi) = p(\mathbf{x}\:;\theta) = p(g(\mathbf{z}\:;\phi)\:;\theta)$.
%
We augment a repeated version of vector $\mathbf{z}$ to create a small training dataset. 
Given the two objectives of BOSS, and the utilization of the adjustable parameters of network $h$, $\phi$, we introduce the surrogate losses $\mathcal{L}_h(p(g(\mathbf{z}\:;\phi)\:;{\theta}),p_{\textrm{d}})$ and $\mathcal{L}_g(g(\mathbf{z}\:;\phi), \mathbf{x}_{\textrm{d}})$, 
and use the back-propagation algorithm \cite{riedmiller1993direct} to optimize $\phi$ based on the minimization
%
%
%
\begin{equation}
\begin{gathered} \label{eqn: main wrt phi}
\min_{\phi} \Big[ \mathcal{L}_g \Big(g(\mathbf{z}\:;\phi),\mathbf{x}_{\textrm{d}}\Big) +
\lambda \mathcal{L}_h \Big(p(g(\mathbf{z}\:;\phi)\:;\theta),p_\textrm{d}\Big) \Big],
\end{gathered} 
\end{equation}
where $\lambda$ is a loss weight. It is important to note that \eqref{eqn: main wrt phi} is used to update parameters $\phi$ while the trained classifier parameters $\theta$ remain unchanged. Due to the use of network $h$, the surrogate loss functions $\mathcal{L}_g$ and $\mathcal{L}_h$ can be selected as the \textcolor{black}{MSE} and the categorical cross-entropy loss, respectively.\par

%
In the following, we present an algorithmic approach to solve BOSS by iteratively optimizing \eqref{eqn: main wrt phi}.
At every iteration, the adjustable parameters $\phi$ of the generator model $g$ are updated to satisfy the two objectives of small PMF distance from $p_\textrm{d}$ and high similarity of the generated example to $\mathbf{x}_{\mathrm{d}}$. We define an exit criteria if either a maximum number of iterations/steps is reached, or if a feasible solution per Definition \ref{def:BOSS} is found given $\mathbf{x}_{\textrm{d}}$, $p_{\textrm{d}}$, $\delta_{\textrm{s}}$, and $\delta_{\textrm{c}}$.
\par
The parameter $\lambda$ in \eqref{eqn: main wrt phi} weighs the relative importance of each loss function to both avoid over-fitting and handle situations in which the solver converges for one loss function prior to the other \cite{goodfellow2016deep}. 
We propose a dynamic update that depends on the distance $D$ 
between the desired specification $p_{\textrm{d}}$ and the actual output $p(g(\mathbf{z}\:;\phi)\:;\theta)$ at every iteration. Specifically, we update $\lambda$ as
\begin{equation} \label{eqn: lablda h update}
\begin{gathered}
\lambda \leftarrow \sigma\left(\lambda - \lambda^{0} \frac{\delta_\textrm{c}}{D} \sign\Big(\frac{\delta_{\textrm{c}}}{D} -1\Big)\right).
\end{gathered} 
\end{equation}
As such, it is required to have the distance function $D$ returning values in the range of $[0,1]$. Here, we utilize the Jensen-Shannon (JS) divergence distance \cite{lin1991divergence}, which returns $0$ for two equivalent PMFs and is upper bounded by $1$. The updates are also a function of the initial selection of $\lambda$ denoted $\lambda^{0}$.
In this paper we focus on the task of image classification. 
\textcolor{black}{The authors in \cite{laidlaw2020perceptual} proposed the LPIPS distance metric as a measure of similarity that mimicks human perceptibility. However, since this metric is classifier-dependent, here we use a universal metric}. 
Specifically, we set $d = 1-I$, where $I$ is the Structural Similarity Index (SSIM) \cite{wang2004image}, which is equal to $1$ for two identical images and captures luminance, contrast, and structure in the measurements.  
%

The signum function $\sign(.)$ is used to determine whether to increase or decrease $\lambda$, based on the ratio of the actual and desired specifications which regulates the amount of change. The ReLU function $\sigma(.\:)$ prevents $\lambda$ from becoming negative.  
This occurs when the desired $p_{\textrm{d}}$ is easily attained in early steps of the algorithm. The procedure is presented in Algorithm \ref{alg:confusing input generation raw 2}. 


%
%
%
\begin{algorithm}[t]
\caption{BOSS Algorithm}
\textbf{Input}: $\mathbf{z}$, $p(.\:;\theta)$, $g$, $\mathbf{x}_{\textrm{d}}$, $p_{\textrm{d}}$, $\delta_{\textrm{c}}$, $\delta_{\textrm{s}}$ \\
\textbf{Output}: $\mathbf{x}$\\
\vspace{1mm}
{ 1:}  \textbf{Initialize} $\mathbf{x}$, $\phi$, $\lambda$ \\
\vspace{1mm}
{ 2:}  \textbf{while} $D(p(\mathbf{x}\:;\theta),p_{\textrm{d}}) \geq \delta_{\textrm{c}}$ \textbf{or} $d(\mathbf{x},\mathbf{x}_{\textrm{d}}) \geq \delta_{\textrm{s}}$ \\ 
\vspace{1mm}
{ 3:} \hspace{2mm}  \textbf{obtain} $\phi$ as the minimizer of \eqref{eqn: main wrt phi} with $\lambda$\\
\vspace{1mm}
{ 4:} \hspace{2mm} $\mathbf{x} = g\big(\mathbf{z}\:;\phi\big)$\\
\vspace{1mm}
{ 5:} \hspace{2mm} \textbf{update} $\lambda$ using \eqref{eqn: lablda h update}\\
\vspace{1mm}
{ 6:} \textbf{return} $\mathbf{x}$\\
\vspace{-4mm}
\label{alg:confusing input generation raw 2}
\end{algorithm}

\section{Experimental results}
\label{sec:subsubhead}

We show results for Targeted attacks which we call BOSS-T. The desired distribution is selected such that $p_{\textrm{d}}(l) = 1$ if $l$ is the target entry and $0$ otherwise. Second, Confidence reduction examples, which we dub BOSS-C. Let the true label of $\mathbf{x}_{\textrm{d}}$ be $f^*$ and the desired confidence be $c_{\textrm{d}}$, then $p_{\textrm{d}}(l) = c_{\textrm{d}}$ if $l=f^*$, and $(1-c_{\textrm{d}})/(M-1)$ otherwise. In addition to the samples from BOSS-T and BOSS-C, we also show instances of boundary adversarial examples. In this case, 
a value of $0.5$ is assigned to the two class labels on both sides of the boundary.


We use $D$ as the JS distance to compare PMFs (desired and actual) and the SSIM index $I$ as a measure of similarity between examples. We define $\sigma_\textrm{JS}$ and $\sigma_\textrm{s}$ as the average of $D$ and $I$, respectively, over the set of observations $\mathcal{X}$. In addition to the aforementioned metrics, for BOSS-T, we utilize the \emph{attack success rate} $\alpha =: N_{\textrm{s}} / |\mathcal{X}|$, where $N_{\textrm{s}}$ is the number of times a generated adversarial sample is classified as the predefined target label. \textcolor{black}{Further, we use $\sigma_2$ and $\sigma_{\infty}$ to denote the average $l_2$ and $l_{\infty}$, respectively.} For BOSS-C, we compute  $\sigma_{\textrm{con}}$, defined as the average confidence level of prediction of the true label over the set of interest $\mathcal{X}$.\par

\textcolor{black}{The random vector $\mathbf{z}$ of dimension $Q=100$ is generated from a uniform distribution over the interval $[0,1]$, \textcolor{black}{and 80 repeated samples are used for training}. The initial loss weights are chosen as $\lambda^{0} = \lambda^{0}_{v} = 0.001$. The parameters are updated using the ADAM optimizer \cite{kingma2014adam} with initial step size $0.025$. The details of the pre-trained classifiers and the generative networks, and our code are available online\footnote{https://github.com/ialkhouri/BOSS}}.  




\begin{table}
\caption{{\small{BOSS-C and NewtonFool with $c_{\textrm{d}} = 0.6$, $\delta_{\textrm{c}} = 0.2$, and $\delta_{\textrm{s}} = 0.85$ for the MNIST dataset.}}}
\vspace{-0.3cm}
\label{confidence reducntion atatck performance}\centering
 \scalebox{0.85}{\begin{tabular}{||c c c c c||} 
 \hline
 Environment & $\textrm{CA}(\%)$ & $\sigma_{\textrm{con}}(\%)$ & $\sigma_{\textrm{s}}(\%)$ & $\sigma_{\textrm{JS}}$ \\[0.5ex] 
 \hline\hline
 Model & 98.1  & 99.1 & 100 &  0.03  \\
 \hline
 Model+BOSS-C  &  98.1 & 66.73  & 87.89 & 0.19  \\
  \hline
 Model+NewtonFool \cite{jang2017objective}  &  75.5 & 67.85  & 97.02 & 0.48  \\
 \hline
 
\end{tabular}}
\end{table}

\begin{table*}
\caption{{\small{BOSS-T attack overall comparison with state-of-the-art attack methods using the CIFAR-10 dataset. For targets, all labels other than the predicted ones are considered.}}}
\label{targeted attacks comparison}\centering
\vspace{-0.3cm}
\scalebox{0.95}{\begin{tabular}{||c c c c c c c||} 
\hline
Attack &  $\alpha (\%)$ & $\sigma_2$ & $\sigma_{\infty}$ & $\sigma_{s}$ & Average Adversarial Confidence & Average Run Time (sec)  \\ 
\hline\hline
CW-$l_2$ ($\kappa=0$) \cite{carlini2017towards}  & 99.55 &  0.4195 & 0.0519 & 0.9966 & 0.4371 & 97.0982 \\
\hline
CW-$l_2$ ($\kappa=10$) \cite{carlini2017towards}  & 95.2381 &  0.7755 & 0.0907 & 0.9889 & 0.9964 & 95.5681 \\
\hline
CW-$l_{\infty}$ ($\kappa=0$) \cite{carlini2017towards}  & 99.55 &  1.0761 & 0.1188 & 0.9780 & 0.7126 & 0.1744 \\
\hline
CW-$l_{\infty}$ ($\kappa=10$) \cite{carlini2017towards}  & 96.82 &  1.91 & 0.1676 & 0.9999 & 0.9966 & 0.8912 \\
\hline
EAD (EN decision) \cite{chen2018ead}  & 100 &  0.4419 & 0.0885 & 0.9961 & 0.3951 & 141.971 \\
\hline
EAD ($l_1$ decision) \cite{chen2018ead}  & 100 &  0.5618 & 0.173 & 0.9938 & 0.3754 & 142.394 \\

\hline\hline
BOSS-T (MSE)  & 100 &  1.1589 & 0.1046 & 0.9818 & 0.9879 & 16.5554 \\
\hline
BOSS-T (Huber)  & 100 &  1.1454 & 0.1075 & 0.9799 & 0.9806 & 21.6079 \\
\hline
BOSS-T (log cosh)  & 100 &  1.1055 & 0.1042 & 0.9828 & 0.9785 & 21.1732 \\
\hline

\end{tabular}}
\end{table*}

\begin{figure}[t]
    \centering
    \includegraphics[width=8.5cm]{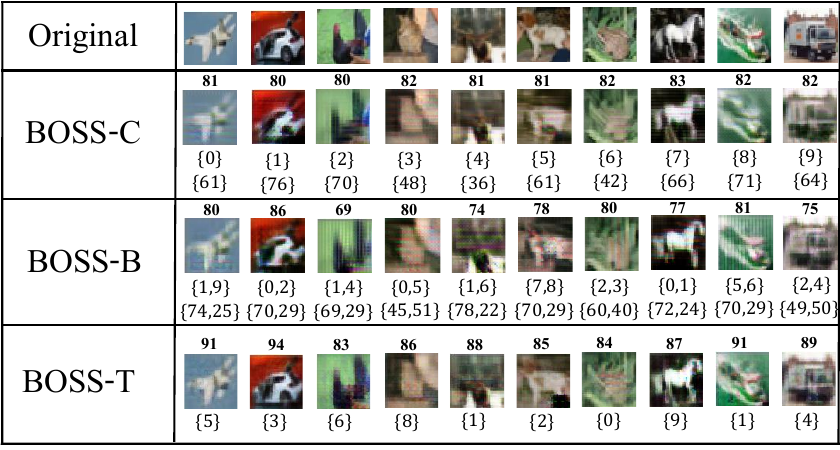} 
\caption{\small{Samples from each class of the CIFAR-10 dataset (columns). The first row shows the original examples. Rows 2-4 represent the synthesized images for BOSS-C, BOSS-B, and BOSS-T, respectively. The rounded percentage values of the confidence level, $c$, of the BOSS-C samples are placed at the bottom of each image along with the predicted label. For BOSS-B, the first pair at the bottom of every image represents the highest two predicted labels along with their rounded classification scores (second pair). Predicted labels are placed at the bottom of each BOSS-T example. The percentage of the rounded similarity measure ($I$) is placed on top of each generated example.}}
    \label{fig: cifar samples}
    \vspace{-0.5cm}
\end{figure}

\begin{figure}[t]
    \centering
    \includegraphics[width=8.5cm]{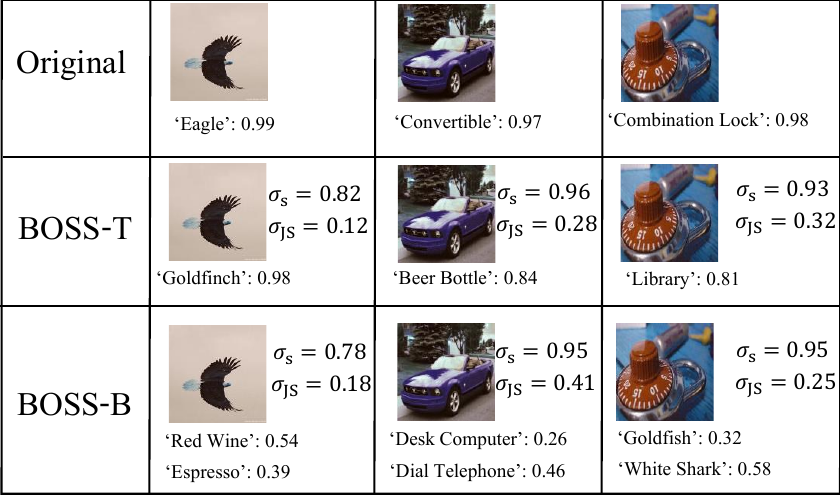} 
\caption{\small{Clean samples from some classes of the ImageNet dataset on VGG16. The first row shows the original examples. Rows 2 and 3 show the synthesized images for BOSS-T and BOSS-B, respectively. The labels are given at the bottom of each example along with the classification confidence. The SSIM and JS measures are reported on the right of each BOSS image.}}

    \label{fig: imageNet samples}
    \vspace{-0.5cm}
\end{figure}


For BOSS-C, Table \ref{confidence reducntion atatck performance} presents the results for $\sigma_{\textrm{con}}$, $\sigma_{\textrm{s}}$, and $\sigma_{\textrm{JS}}$. \textcolor{black}{For NewtonFool, we use $50$ iterations and set the \textcolor{black}{small perturbations parameter as} $\eta = 0.01$}. For an average confidence of $\sigma_{\textrm{con}}\approx 67\%$, BOSS-C (with $c_\textrm{d}=0.6$) and NewtonFool are successful in reducing the average confidence of the model from \textcolor{black}{the original value $\sigma_{\textrm{con}}=99.1\%$}. This is accomplished with very high level of similarity measure of $\sigma_{\textrm{s}} \approx 88\%$ and $\sigma_{\textrm{s}} \approx 97\%$ for BOSS-C and NewtonFool, respectively. While NewtonFool attack produces examples with higher $\sigma_{\textrm{s}}$, it fails to maintain the classification accuracy $\textrm{CA}$ which drops from \textcolor{black}{$98.1\%$} to $75.5\%$, and yields a large distance $\sigma_{\textrm{JS}}$ from the desired PMF.\par


The results for BOSS-T are presented in Table \ref{targeted attacks comparison} and compared to the state-of-the-art CW \cite{carlini2017towards} and elastic nets attacks (EAD) \cite{Chen_Sharma_Zhang_Yi_Hsieh_2018}. \textcolor{black}{
We choose these baselines since their formulations admit any differentiable loss function, unlike the well-known Projected Gradient Descent method \cite{madry2017towards} where the distance between $\mathbf{x}$ and $\mathbf{x}_{\textrm{d}}$ is limited to the $l_p$ norm. 
} For each testing example, all labels other than the predicted one are used as targets. Results of the average adversarial confidence and average run time are reported for each case in the last two columns. The parameters for CW and EAD on CIFAR10 are selected from the reported parameters in their respective papers. It is important to note that both of these methods apply their attacks based on the pre-softmax output (sometimes called logits), and hence, cannot specify an exact $p_{\textrm{d}}$. However, in the CW formulation, the parameter $\kappa$ was introduced to represent the desired logit value to achieve higher adversarial confidence. While setting $\kappa=0$ in the CW attack returns the best result in terms of imperceptibility, it does not yield the best adversarial confidence. Therefore, we report results for $\kappa = 10$, which yields a better tradeoff between both measures. Furthermore, we implement BOSS-T with different surrogate loss functions in $\mathbf{x}$ and $\mathbf{x}_{\textrm{d}}$.\par

\textcolor{black}{While some variant of EAD and CW achieve a relatively lower imperceptibility (as seen from $\sigma_2$ and $\sigma_{\infty}$), in terms of adversarial confidence, all variants of BOSS-T return the best results. CW, with $\kappa=10$, reports similar adversarial confidence, but the attack success ratio does not achieve 100\%, and requires 5 times the run time for $l_2$ and nearly 50\% increase in imperceptibility (presented in $\sigma_2$ and $\sigma_{\infty}$) are observed.}\par


\textcolor{black}{Figures \ref{fig:motiv}, \ref{fig: cifar samples}, and \ref{fig: imageNet samples} show BOSS-C, BOSS-T, and BOSS-B samples. As observed, BOSS is successful in generating adversarial examples given the desired input specification, represented the original images in the first row, and output specification as represented in the corresponding application.}


\section{Conclusion}
We introduced BOSS, a framework for one-shot synthesis of adversarial samples that satisfy input and output specifications for pre-trained classifiers. We formulated the BOSS problem and proved that the problem is \textbf{NP-Complete}. We developed an approximate solution using generative networks and surrogate loss functions. The flexibility of BOSS is demonstrated through various applications, including synthesis of boundary examples, targeted attacks, and reduction of confidence samples. A set of experiments verify that BOSS, in general, performs on par with state-of-the-art methods and generates the highest adversarial confidence examples.




\small{
\bibliographystyle{IEEEbib}
\bibliography{refs}
}
\end{document}